\documentclass{article}
\usepackage{arxiv}

\usepackage[hyphens]{url}
\usepackage{graphicx}
\graphicspath{{figures/}}
\usepackage[small,bf]{caption}
\usepackage{booktabs}
\usepackage{amsmath}
\usepackage{amssymb}
\usepackage{algorithm}
\usepackage{algorithmic}
\usepackage{placeins}
\usepackage[round]{natbib}
\renewcommand{\cite}{\citep}
\usepackage[colorlinks=true,linkcolor=black,citecolor=blue,urlcolor=blue]{hyperref}
\newtheorem{proposition}{Proposition}

\title{Evidential Rule Learning for Interpretable Classification\\with Abstention}

\author{%
  \begin{tabular}[t]{@{}l@{}}
    \textbf{Javier Fumanal-Idocin}\\
    School of Computer Science\\
    and Electronic Engineering\\
    University of Essex, Colchester CO4 3SQ, UK\\
    \texttt{j.fumanal-idocin@essex.ac.uk}
  \end{tabular}%
  \hspace{1.5em}%
  \begin{tabular}[t]{@{}l@{}}
    \textbf{Javier Andreu-Perez}\\
    School of Computer Science\\
    and Electronic Engineering\\
    University of Essex, Colchester CO4 3SQ, UK\\
    \texttt{j.andreu-perez@essex.ac.uk}
  \end{tabular}%
}

\renewcommand{\shorttitle}{Evidential Rule Learning with Abstention}
\renewcommand{\headeright}{}

\begin{document}

\maketitle

\begin{abstract}
 Interpretable classification often requires more than accurate predictions for real-life deployment: models should be transparent about the evidence behind their decisions and abstain when they cannot decide reliably. We introduce Fast Evidential Rule Learning (FERL), a method that learns interpretable, accurate fuzzy rule models whose outputs are evidential. Unlike post-hoc calibration, FERL's belief, plausibility, and abstention capabilities arise directly from the fuzzy memberships in a single deterministic pass, with no auxiliary head, held-out set, or repeated inference. Our theoretical analysis further shows that FERL is Lipschitz stable, which means that its evidential outputs vary smoothly with the input. Against state-of-the-art rule learners, FERL is statistically significantly more accurate across a 30 tabular-dataset benchmark ($+2.6\%$ average accuracy over the second best). Its native set predictions attain the best utility-discounted accuracy among credal classifiers ($u_{65}/u_{80}=0.80/0.83$ vs.\ $0.79/0.80$ for the naive credal classifier), at higher set coverage ($0.92$ vs.\ $\le0.82$). FERL also matches dedicated out-of-distribution detectors on tabular near-OOD detection ($77.7$ vs.\ $77.4$ AUROC for the strongest baseline). Under detector-class-disjoint concept-bottleneck evaluation, its it is within $2.3$ AUROC points of the strongest dedicated detector on both CUB and AwA2, while attaining the best AwA2 AUPR-Out ($68.3$) and novel-class rejection ($57.2$), while being able to name which attributes are anomalous.
\end{abstract}


\section{Introduction}

Decision trees and rule lists remain the reference models whenever a
prediction has to be justified to a human: a leaf is a conjunction of
conditions, and the path that reaches it, the explanation
\cite{rudin2019stop}. This transparency, however, usually comes at a price.
Tree-based classifiers such as CART \cite{breiman1984cart} and C4.5
\cite{quinlan1993c45} must grow large to be accurate, and a single
leaf returns a hard class label or local class distribution with only a limited
account of how much evidence supports the decision. This makes standard tree probability estimates biased towards overconfident values \citep{provost2003tree}. Rules are also uninformative when learned directly over unstructured inputs such as image pixels, so they require neuro-symbolic approaches, e.g., concept bottleneck models (CBMs) \citep{koh2020concept}. Because of these problems, post-hoc explanation methods such as LIME \citep{ribeiro2016lime}, SHAP \citep{lundberg2017shap}, and Grad-CAM \citep{selvaraju2020grad} remain an alternative for building explanations, but they also have problems: they explain an already existing predictor and can be unfaithful to its actual decision \citep{adebayo2018sanity, taimeskhanov2024cam} and they might give a poor estimation of the global reasoning process of the explained model \citep{rudin2019stop}. Which is why interpretable-by-design models are preferable when possible.

For high-stakes domains interpretability alone is not enough: a model must also report realistic, reliable confidence in each decision \citep{hullermeier2021aleatoric}. A popular approach to do this is to calibrate the model predictions, or use a score that quantifies the uncertainty in each decision. Post-hoc calibrators can be model-agnostic, like temperature scaling \cite{guo2017calibration}, Dirichlet calibration \citep{NEURIPS2019_8ca01ea9}, isotonic regression \cite{zadrozny2002isotonic} and conformal predictions \cite{angelopoulos2023gentle}. There are also specific alternatives for trees, such as per-leaf calibration \citep{leathart2017probability, johansson2019calibrating}, that adjust the prediction probability, although not the amount of evidence supporting it \citep{amini2020deep}. These techniques also tend to degrade under distribution shift \citep{ovadia2019can}, which is problematic for real-life deployment. 
Reliability can also be built into the prediction. Sampling-based methods such as MC-dropout \citep{gal2016dropout} and deep ensembles \citep{lakshminarayanan2017ensembles} estimate uncertainty but require repeated inference, while deterministic uncertainty methods (DUMs) capture \emph{epistemic} uncertainty in a single pass, using distance-aware \citep{mukhoti2023ddu} or evidential \citep{amini2020deep} scores. The evidential approaches come from belief-function theory \cite{shafer1976evidence,smets1994tbm}. Popular approaches in this direction are evidential deep learning \cite{sensoy2018edl,amini2020deep} and Dempster--Shafer (DS) neural networks \cite{denoeux2019logistic}. Evidential fuzzy rule classifiers use the possibility semantics of fuzzy memberships \cite{dubois1988possibility} as evidence \citep{shiraishi2025evidential}. There are, however, some problems in existing DUMs. They need spectral fine-tuning of the neural architecture to avoid feature collapse, and because the score depends on the geometry of a learned representation, they only reliably detect geometric novelty. This means that they tend to assign deceptively low uncertainty to a near-OOD input that overlaps the training manifold. Besides, they report a scalar with no symbolic account of \emph{why} a case is doubtful.

In this work, we introduce Fast Evidential Rule Learning (FERL), a fuzzy rule-tree learner where  each node emits a DS mass function in which the firing strength of a rule becomes the mass placed on its class distribution, and whatever does not fire becomes mass on ignorance. This makes every prediction carry a point label, a graded confidence, and a set-valued prediction. FERL is, to our knowledge, the first model that is at once an interpretable rule learner, a credal classifier with abstention and an OOD detector, in a single deterministic pass. This makes FERL a compelling way to add interpretability, reliability, and principled abstention to tabular classification and to neuro-symbolic pipelines.

As a summary, our contributions are as follows:

\begin{itemize}
	\item \textbf{A native evidential and accurate rule learner.} We introduce FERL, a fuzzy rule tree whose DS evidence is read directly off the fuzzy memberships, computing belief, 	plausibility, credal prediction sets, and abstention in a single pass, with no separate calibration model, held-out set, or repeated forward passes.
	\item \textbf{A closed-form account of construction and evidential prediction.} We analytically characterise both the learned fuzzy tree and its evidence sources: fuzzy splits weakly decrease Gini impurity, conserve in-support routing mass, and give a local stability certificate controlled by their band widths. The induced mass functions admit a closed form, interval-dominance prediction reduces to an ignorance-margin rule, and repeated Dempster combination of nested nodes monotonically shrinks ignorance.
	\item \textbf{Broad empirical evaluation.} We benchmark FERL on 30 tabular classification datasets spanning diverse sizes, domains, and feature dimensions. We also benchmark FERL on 2 image datasets with annotated concepts within a concept-bottleneck image benchmark. In this setting, FERL operates on concept vectors rather than raw pixels, so its rules and abstention decisions are expressed in human-understandable attributes. FERL OOD performance matches dedicated OOD detectors while naming the anomalous attributes, and stays informative under off-support distribution shift, where it becomes more prone to abstain as its prediction sets widen.
\end{itemize}

\section{Related Work}

\paragraph{Interpretable trees and rule learners.}
CART \cite{breiman1984cart} and C4.5 \cite{quinlan1993c45} are the canonical
greedy trees, and RIPPER \cite{cohen1995ripper} is another classic algorithm that produces compact ordered rule lists. Fuzzy rule learners soften crisp thresholds using fuzzy logic:
decision trees replace hard splits with membership functions
\cite{olaru2003fuzzy, fumanal2024ex, huhn2009furia}. Recent work uses gradient-based and neural optimisation for rule-learning \citep{omran2018scalable, qiao2021learning, wang2021scalable, wang2023learning, dierckx2023rlnet, fumanal2025compact, xu2026neural}. All of these approaches rely heavily on the Straight-Through estimator \citep{bengio2013estimating} to mitigate the problem of non-differentiable argmax operations in rule-based reasoning. Modern non-neural literature also focuses on scalability: FIGS \cite{tan2022figs} grows a sum of shallow trees to stay as small as possible, and rule sampling \citep{pellegrina2024scalable} tries to obtain the smallest size with guarantees. FERL closes both gaps: it learns accurate classifiers across different rulebase-complexity levels and trains efficiently to handle scale to large datasets.

\paragraph{Evidential and credal classification.}
Dempster--Shafer theory \cite{dempster1967upper,shafer1976evidence} models partial and conflicting evidence with belief functions, with a long history in classification, from the evidential $k$-NN rule \cite{denoeux1995knn} to evidential neural networks \cite{sensoy2018edl}. In deep learning \cite{ulmer2023survey}, such methods predict Dirichlet (or belief) parameters in a single pass, grounded in subjective logic \cite{josang2016subjective}. Quantified ignorance supports abstention above a threshold \citep{xin2021art, mao2024predictor}. FERL shares this single-pass procedure but computes evidence using interpretable fuzzy rules rather than a black-box network, following the established connection between graded membership and belief functions: a source firing with strength $\phi$ induces a Bayesian mass discounted by $1-\phi$ \cite{smets1994tbm}. 

Credal classifiers reason with sets of probabilities: the naive credal classifier \cite{corani2008ncc} and credal decision trees built on the imprecise Dirichlet model \cite{abellan2003credaltree} return non-dominated classes, evaluated by utility-discounted accuracy \cite{zaffalon2012evaluating}. Recent work uses credal sets to quantify decision uncertainty as well \citep{lienen2021credal, wang2025credal}. Unlike existing tree-based credal classifiers, FERL produces set-valued outputs natively from fuzzy memberships rather than imprecise leaf-frequency estimates, mitigating the miscalibration of tree-based inference.

\section{Method}

\paragraph{Notation.}
We address multiclass classification with $C$ classes, and write the label set
as the frame of discernment $\Theta=\{1,\dots,C\}$. Training data are $N$
labelled samples $\{(x_i,y_i)\}_{i=1}^{N}$, each with $d$ features and a label
$y_i\in\Theta$. FERL grows a fuzzy tree $T$ with nodes $o$; each internal split
pairs a feature with a fuzzy set of membership $\mu(\cdot)\in[0,1]$, and
memberships multiply along a root-to-$o$ path into a firing strength
$\phi_o(x)\in[0,1]$. Every node stores a consequent class distribution
$p_o\in[0,1]^{C}$, $\sum_c p_o(c)=1$, estimated from the soft weighted samples it covers. At inference, evidence is combined over a node set $A(x)$: all active nodes in the compact variant, or the active leaves in the deep variant. We write $K=\max_x|A(x)|$ for the largest number of evidence sources combined for one input. Each $o\in A(x)$ contributes a Dempster--Shafer mass $m_o$ on $\Theta$, and the
combined, normalised mass $m$ (normaliser $Z$) yields a belief $\mathrm{Bel}(c)$, a plausibility $\mathrm{Pl}(c)$, and an ignorance mass $m(\Theta)$. The pignistic transform $\mathrm{BetP}$ gives the point label $\hat y=\arg\max_c\mathrm{BetP}(c)$, and interval dominance gives the credal (set-valued) prediction $S(x)\subseteq\Theta$, an abstention when $S(x)=\Theta$.
The tree complexity is governed by a rule budget $R$ (terminal leaves, equivalent to rules, with $R\le$ \texttt{max\_rules}) and a total number $J$ of stored nodes, together with a maximum depth $D$, a patience on
non-improving splits, and a minimum gain $\delta$. A learned split
additionally places its threshold at $\theta$ with a linear fuzzy decision boundary of half-width $h$. Features not observed in a root-to-node $o$ path are denoted as ``free'' with respect to $o$.

\subsection{Building the Fuzzy Tree}

FERL grows a classification tree in which every internal split assigns a feature together with one fuzzy set that defines the condition on that feature. This involves two procedures:
\begin{itemize}
	\item The parameters for each split are computed using a bootstrapping procedure that allows us to find fuzzy partitions with statistical robustness with respect to the gain function.
	\item We crop the fuzzy sets' support to avoid spurious firing in non-explored regions, which detects geometric OOD, and store in each node the necessary parameters to do near-OOD detection.
\end{itemize}

\paragraph{Learning the Fuzzy Splits.}
\label{sec:soft-splits}
FERL works with two possible gain functions: the Complete Classification Index (CCI), which measures the gain in correctly classified soft mass obtained by replacing a node with a candidate split; and weighted Gini impurity (WGI). CCI works very well for small tree variants, but it is more costly to compute than WGI, which is the preferred metric for deeper variants. Growth is greedy up to a budget on the number of rules and the depth, which allows us to target different interpretability-performance aims.

For small FERL trees, we pre-fix the possible fuzzy partitions, using intuitive semantics. In these cases, this results in an optimal trade-off between interpretability and performance. For deeper variants, this approach does not scale. So, we repeatedly bootstrap and find the standard Gini-optimal cut in each resample. The resulting empirical distribution of cut locations determines the following fuzzy partition:

\begin{equation}
\mu_{\theta,h}(x_f) \;=\; \min\!\left(1,\ \max\!\left(0,\ \frac{\theta + h - x_f}{2h}\right)\right),
\end{equation}
where $x_f$ is the value of the candidate feature, $\theta$ the median of the per-resample optimal cut locations $\{c_b\}_{b=1}^{B}$, and $h$ is a robust measure of their spread:
\begin{equation}
h \;=\; \gamma\cdot\mathrm{MAD}, \qquad
\mathrm{MAD} \;=\; \mathrm{median}_b\,\lvert c_b-\theta\rvert ,
\end{equation}
with $\gamma>0$ a width multiplier. The other branch receives then the complementary membership $1-\mu_{\theta,h}(x_f)$. The condition is thus fully active for $x_f\le\theta-h$, off for $x_f\ge\theta+h$, and behaves linearly across the interval $[\theta-h,\theta+h]$.

\paragraph{Bounded-Support Memberships and Geometry-Awareness}
A point far outside the training range that activates a leaf at full strength can result in false node confidence. To solve this, we soft-bound the support of each fuzzy membership function to the feature range the node actually observed in training, decaying it to zero beyond a small margin $\epsilon$. This makes out-of-distribution points that fall outside the observed range on some feature have their leaf activations collapse towards zero, which raises the model's ignorance. This mechanism is exact: as shown in the supplementary material, the routing mass lost to the support bounds decomposes node by node, each term naming the split feature responsible, and if every leaf firing vanishes the combined evidential output is provably vacuous. However, this procedure cannot detect a near-OOD point that stays within the observed data manifold. For instance, when the sample belongs to a novel class that overlaps the training set. To solve this, we model the behaviour of \emph{unsplit} features in every node: each of them fits a diagonal Gaussian weighted by the fuzzy memberships to its root-to-node path over the features absent from it. Since the firing $\phi_o$ depends on $x$ only through the features tested on that path, this
is precisely a model of the unsplit features given the split ones:
\begin{equation}
p\big(x_{\mathrm{free}(o)} \,\big|\, \phi_o\big) \;\approx\; \mathcal{N}\!\big(\mu_o,\ \mathrm{diag}(\sigma^2_o)\big).
\end{equation}
The diagonal covariance keeps every unsplit feature's contribution separable, so we can then detect which attributes are anomalous at any path. At test time we compute the firing-weighted average of the diagonal Mahalanobis distance per node. These feature scores show how atypical a sample is where the tree was not looking.

\subsection{Building Evidential Outputs}

In FERL, each fuzzy activation is treated as a piece of evidence: we turn each activated node $o$ into a Dempster--Shafer mass function on the label set $\Theta=\{1,\dots,C\}$: the node places mass $\phi_o(x)\,p_o(c)$ on class $c$ and residual mass $1-\phi_o(x)$ on the whole frame $\Theta$. A node that fires weakly contributes mostly ignorance, and a node that fires strongly commits to its class distribution.  Combining the masses of the active nodes with Dempster's rule of combination returns a belief $\mathrm{Bel}(c)$ and a plausibility $\mathrm{Pl}(c)$ for every class. A point prediction is read from the pignistic transform of the combined mass, and a set-valued prediction is obtained by interval dominance: class $c$ is part of the final prediction when $\mathrm{Pl}(c)\ge\max_{c'}\mathrm{Bel}(c')$. The prediction set contracts to a singleton when one class dominates, expands when several classes are plausible, and covers the whole frame when no evidence reliably separates any of them, which naturally results in an abstention.

An important idea in FERL inference is focusing on \emph{which} nodes to combine. Dempster's rule treats all nodes as independent bodies of evidence, which they are not. In a compact tree, this is not problematic, as the few internal nodes along a path carry complementary evidence. In fact, we find that internal nodes can be used to regularise the overconfident estimations of the leaves. In a deep tree, the nodes are numerous and strongly correlated, and combining all of them drives ignorance to zero. So, in that case, we only combine the leaves. Because a sample's path activates only a handful of leaves, the combined mass retains a meaningful ignorance term, and this term turns out to be a locally adaptive metric for genuinely hard regions.

\begin{figure}[ht!]
		\centering
		\includegraphics[width=0.85\linewidth]{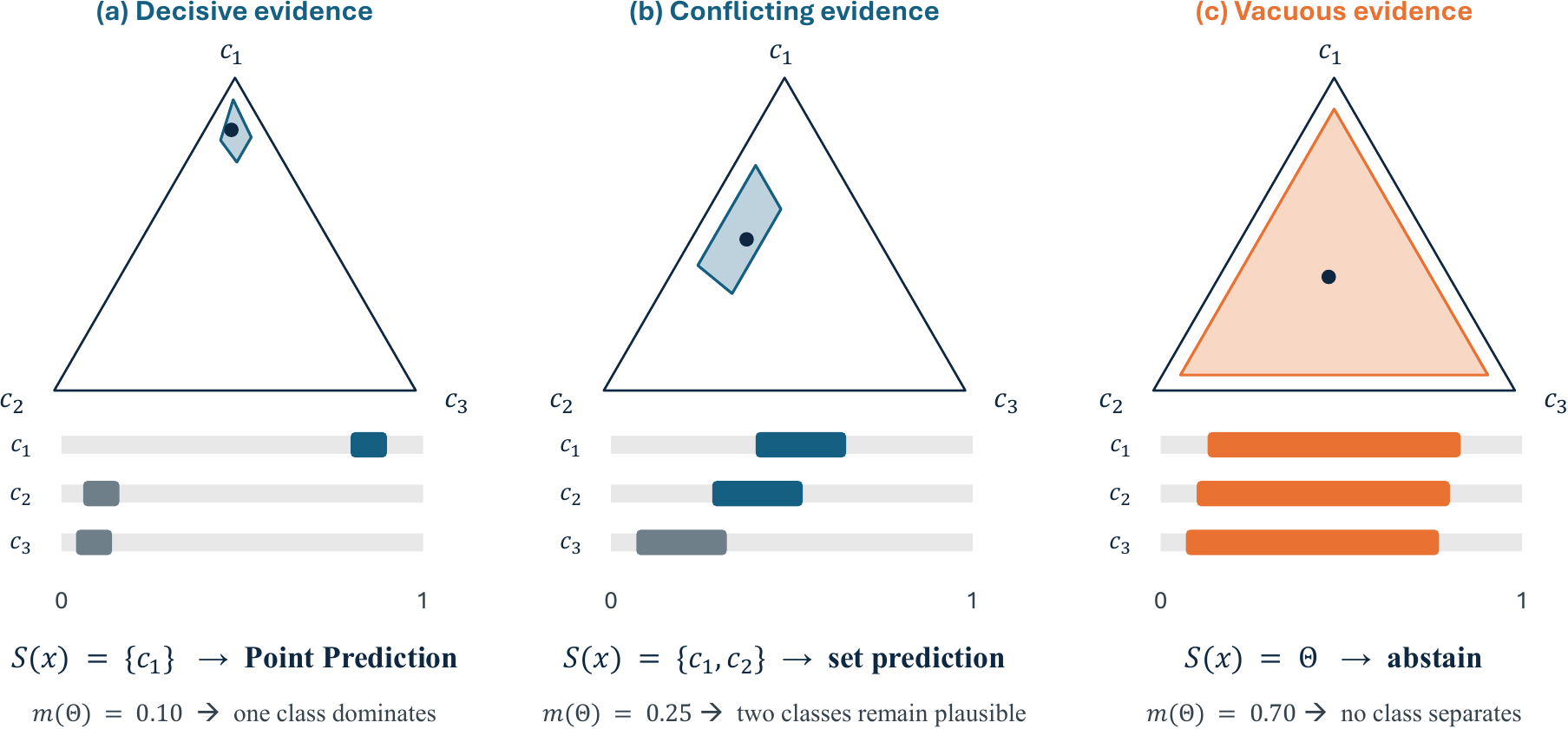}
		\caption{\textbf{FERL evidential output.} FERL naturally expresses doubt about its predictions, computed using Eq.~\ref{eq:decision}. In (a), the belief in the dominant class is clear, so the result is a singleton. In (b), conflicting evidence shows that the plausibility of class c$_2$ is bigger than belief in the dominant class, so the prediction set includes both classes. Finally, in (c) the evidence is not decisive for any class, so FERL abstains.}
		\label{fig:credal_output}
\end{figure}

\paragraph{Analytical Form of the Evidential Output} We write $\phi_o=\phi_o(x)$ for brevity. Each node is a
reliability-discounted Bayesian mass function:
\[
m_o(\{c\})=\phi_o p_o(c), \qquad m_o(\Theta)=1-\phi_o .
\]
 Before normalisation, the combined mass has only the following non-zero terms:
\[
\tilde m(\Theta)=\prod_{o\in A(x)}(1-\phi_o),
\]
\[
\tilde m(\{c\})=
\prod_{o\in A(x)}\left(1-\phi_o+\phi_o p_o(c)\right)
-\prod_{o\in A(x)}(1-\phi_o).
\]
After division by $Z=\tilde m(\Theta)+\sum_c\tilde m(\{c\})$, these quantities
give the normalised mass $m$. The normaliser $Z$ is the total mass Dempster's rule assigns to non-empty focal sets. Dividing by $Z$ redistributes it and restores $\sum_c m(\{c\})+m(\Theta)=1$. Hence $\mathrm{Bel}(c)=m(\{c\})$ and $\mathrm{Pl}(c)=m(\{c\})+m(\Theta)$, and interval dominance simplifies to
\begin{equation} \label{eq:decision}
S(x)=\{c:\ m(\{c\})\geq \max_j m(\{j\})-m(\Theta)\}.
\end{equation}
This credal set prediction chooses classes whose mass lies within the residual ignorance of the best-supported class, and the set prediction expands as the model lacks evidence for the decision (Figure~\ref{fig:credal_output}). This expression also clarifies the dependence problem in trees. Combining an existing singleton class-plus-ignorance mass $m$ with an additional node whose ignorance is $a=1-\phi$ gives
\begin{equation}
\begin{array}{rcl}
m'(\Theta) &=& \frac{m(\Theta)a}{Z'},\\
Z' &=& a+m(\Theta)(1-a)+\sum_c m(\{c\})\phi p(c).
\end{array}
\end{equation}
Since $Z'\geq a$, adding a non-vacuous source cannot increase residual ignorance. In a deep tree, all-node Dempster combination repeatedly discounts ignorance with correlated evidence. This accounts for why we exclude the internal nodes from the final decision in the deep variant, and also suggests the use of selective node combinations or other decision rules in order to obtain good ignorance measures while exploiting the nested structure of the tree. For our experimentation in the main paper, we have used the standard DS combination rule on the leaf nodes, as its empirical performance proved superior to other alternatives. However, we also study in our Supplementary materials other possible alternatives.

\subsection{Analysis of Output Stability}
\label{sec:analysis}

Contrary to crisp trees, FERL output is a continuous function of the input. This allows FERL to make more committed decisions in ``clear'' splits, while being more conservative in ``dubious'' ones. We show how this works in the following proof.
Let $\lambda$ be the largest slope of any fuzzy membership used in the tree, let $K=\max_x|A(x)|$ bound the number of combined nodes, and let $D$ be the maximum depth.

\begin{proposition}[Lipschitz stability]
\label{prop:lipschitz}
The unnormalised evidential masses $\tilde m(\{c\})$ and $\tilde m(\Theta)$ are Lipschitz in $x$ with constant $O(KD\lambda)$. Consequently, on the region $\Omega_\tau=\{x: Z(x)\ge\tau\}$ where the normaliser is bounded below, the pignistic output is Lipschitz,
\[
\|\mathrm{BetP}(\cdot\mid x)-\mathrm{BetP}(\cdot\mid x')\|_1
\;\le\; \frac{\kappa\,K\,D\,\lambda}{\tau}\,\|x-x'\|_\infty ,
\]
for a constant $\kappa$ depending only on the number of classes $C$.
\end{proposition}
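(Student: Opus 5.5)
The plan is to build the bound from the inside out: firing strengths first, then the unnormalised masses, then the normalised masses and the pignistic transform, working in $\|\cdot\|_\infty$ on inputs throughout.

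\emph{Step 1: firing strengths.} Each membership $\mu$ along a path is piecewise linear in a single coordinate with slope at most $\lambda$, including the soft support bounds. Hence $|\mu(x_f)-\mu(x'_f)|\le\lambda\|x-x'\|_\infty$. Since $\phi_o$ is a product of at most $D$ factors in $[0,1]$, the telescoping inequality $|\prod a_i-\prod b_i|\le\sum_i|a_i-b_i|$ gives
\[
|\phi_o(x)-\phi_o(x')|\le D\lambda\|x-x'\|_\infty .
\]

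\emph{Step 2: unnormalised masses.} Each factor $1-\phi_o$ and $1-\phi_o+\phi_o p_o(c)=1-\phi_o(1-p_o(c))$ lies in $[0,1]$ and is $D\lambda$-Lipschitz. A product over at most $K$ such factors is therefore $KD\lambda$-Lipschitz by the same telescoping bound. So $\tilde m(\Theta)$ is $KD\lambda$-Lipschitz, and $\tilde m(\{c\})$, a difference of two such products, is $2KD\lambda$-Lipschitz.

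The one subtlety is that the index set $A(x)$ may change with $x$. I would handle this by taking $A$ to be all leaves (or all nodes) and noting that inactive nodes have $\phi_o=0$, so they contribute a factor $1$. The bound $K$ then has to be read as a bound on the number of nodes with nonzero firing along the segment. The cleanest fix is to argue pointwise: locally near $x$, at most $K$ factors differ from $1$. The resulting constant is then uniform, and Lipschitz continuity along the segment $[x,x']$ follows by integrating the local bound. I expect this bookkeeping to be the main obstacle, because the statement's $K$ is defined pointwise rather than along segments. I would address it with a piecewise-linear subdivision of the segment, on each piece of which the active set is constant up to factors equal to $1$.

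\emph{Step 3: normalised masses and the pignistic transform.} Write $Z=\tilde m(\Theta)+\sum_c\tilde m(\{c\})$, which is $(2C+1)KD\lambda$-Lipschitz. Normalise by $m=\tilde m/Z$. Since every $\tilde m\le1$ and $Z\le C+1$, on $\Omega_\tau$ we get
\[
|m(x)-m(x')|\le\frac{|\tilde m(x)-\tilde m(x')|}{Z(x)}+\tilde m(x')\frac{|Z(x)-Z(x')|}{Z(x)Z(x')}.
\]
This yields a bound $O(C)KD\lambda/\tau^2$. To reach the stated $1/\tau$, I would instead use that $\tilde m(x')/Z(x')=m(x')\le1$, which turns the second term into $|Z-Z'|/Z(x)\le|Z-Z'|/\tau$. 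Care is needed here: both $x,x'$ are assumed in $\Omega_\tau$, and if the segment between them leaves $\Omega_\tau$ the direct two-point estimate above still works, so no path argument is needed for this step.

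Finally, $\mathrm{BetP}(c)=m(\{c\})+m(\Theta)/C$ is linear in $m$. Summing over $c$ then gives
\[
\|\mathrm{BetP}(\cdot\mid x)-\mathrm{BetP}(\cdot\mid x')\|_1\le\sum_c|\Delta m(\{c\})|+|\Delta m(\Theta)|,
\]
and collecting terms gives $\kappa=O(C)$.
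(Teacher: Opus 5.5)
Your proposal is correct and is essentially the paper's argument: $D\lambda$-Lipschitz firings, $KD\lambda$-Lipschitz products of at most $K$ factors in $[0,1]$, then a quotient estimate on $\Omega_\tau$. Your use of $m(x')\le 1$ to get $1/\tau$ rather than $1/\tau^2$, your two-point form of the estimate (which, unlike a gradient bound, does not need the segment $[x,x']$ to stay inside $\Omega_\tau$), and your attention to the $x$-dependence of $A(x)$ spell out steps the paper's sketch leaves implicit. Two small remarks: the active-set issue needs no subdivision, because telescoping over $A(x)\cup A(x')$ already gives $2KD\lambda$ (every other factor equals $1$ at both points); and summing your per-term bounds literally gives $\kappa=O(C^2)$, with $O(C)$ recovered only by using that the normalised masses at $x'$ sum to one, which is harmless since the statement only asks for a $C$-dependent constant.
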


\noindent\emph{Proof sketch.}
Each firing strength is a product of at most $D$ single-feature $\lambda$-Lipschitz memberships, and each mass a product and difference of at most $K$ such bounded factors, giving $O(KD\lambda)$; the quotient rule applied to $\mathrm{BetP}(c)=\big(\tilde m(\{c\})+\tilde m(\Theta)/C\big)/Z$ on $Z\ge\tau$ then has this bound. The full proof, and a tighter per-path refinement of the constant, are given in the supplementary material.

The bound shows the impact of tree architecture in the final output: stability improves when fewer evidence sources are combined (smaller $K$), paths are shallower (smaller $D$), and fuzzy memberships are smoother (smaller $\lambda$). The rule budget $R$ controls this indirectly by limiting the available leaves and hence the number of sources that can enter $A(x)$.



\section{Experiments}

\subsection{Setup}

\paragraph{Tabular data.} We evaluate on thirty tabular classification datasets from the UCI repository \cite{uci_repository}, spanning binary and many-class problems with numeric and categorical features, up to 64 features and 19k samples. Results are reported by stratified five-fold cross-validated means. Within each outer training fold, we reserve $25\%$ for conformal calibration and fit every predictor on the remainder. Classification performance is measured in accuracy, and model complexity based on the number of leaves per tree (equivalent to rules in a rulebase). We report the area under the risk--coverage curve (AURC) for selective classification, and for set-valued prediction we report determinacy (fraction of singleton outputs), empirical set coverage, mean set size, and utility-discounted accuracy at the standard $u_{65}$ and $u_{80}$ discount levels \cite{zaffalon2012evaluating}.

\paragraph{Image data.} We use concept-bottleneck models (CBMs)
\cite{koh2020concept} on Caltech--UCSD Birds (CUB)
\cite{wah2011cub} and Animals with Attributes 2 (AwA2)
\cite{xian2019zero}. A neural concept detector maps each image to a vector of
probabilities for human-named attributes, and FERL, CART, and logistic
regression are then fit as alternative classification heads over exactly the
same detector-predicted vectors. This makes FERL rules, confidence, and abstention refer to attributes rather than image pixels. We evaluate accuracy and OOD capabilities in this setting for a standard symbol-extractor--FERL pipeline.

\subsection{Baselines}

\paragraph{FERL configurations.} We use three different variants of FERL: \emph{compact} uses a maximum of 15 rules and has a precomputed fuzzy partition where the conditions map to semantically intuitive concepts (``low'', ``medium'' and ``high''); \emph{medium} uses a maximum of 50 rules; \emph{deep} uses a maximum of 200 rules and combines only the leaves of its tree. The three variants also sit at three points of the stability dial of Section~\ref{sec:analysis}: from compact to deep, larger rule and depth budgets permit larger $K$ and $D$, trading the stability constant of Proposition~\ref{prop:lipschitz} for accuracy. 

\paragraph{Comparison methods.} We compare against modern and classic interpretable trees and rule learners. For crisp tree methods: \textbf{CART}
\cite{breiman1984cart}, \textbf{C4.5} \cite{quinlan1993c45}, and 
\textbf{FIGS} \cite{tan2022figs}. For fuzzy rule-based classifiers, we compare against Fuzzy-rule baselines: \textbf{FURIA} \cite{huhn2009furia} and Fuzzy-UCS (\textbf{FUCS}) that also uses Dempster--Shafer inference \cite{shiraishi2025evidential}. For neural and sampling-based rule learners: \textbf{RRL} \cite{wang2021scalable}, \textbf{RL-Net} \cite{dierckx2023rlnet}, \textbf{NeuRules}
\cite{xu2026neural}. To compare with sampled rule lists we implemented \textbf{SamRuLe}
\cite{pellegrina2024scalable}. We also include a logistic regression (\textbf{LR}) as a non-rule-based interpretable baseline. For dedicated credal and evidential classifiers: the naive credal classifier (\textbf{NCC}) \cite{corani2008ncc}, credal decision trees (\textbf{CDT}) and their C4.5 variant \cite{abellan2003credaltree}, and evidential deep learning (\textbf{EDL}) \cite{sensoy2018edl} (which is non-explainable). For set-valued comparison, \textbf{MLP-Conformal} applies deterministic, tie-conservative split-conformal adaptive prediction sets (APS) at $\alpha=0.1$ \cite{romano2020aps} to the softmax output of a one-hidden-layer, 64-unit multilayer perceptron. For the sake of comparison, we also include random forest (\textbf{RF}) \cite{breiman2001random} and gradient boosting (\textbf{GB}) \cite{chen2016xgboost} as two non-interpretable, highly performant classifiers.

\section{Results}

\subsection{Tabular Data}
\paragraph{Accuracy and Model Complexity}

Table~\ref{tab:frontier} reports accuracy, model size, and selective-risk AURC for all methods. The compact FERL tree attains $77.8$ accuracy with fewer than six rules, already competitive with a full C4.5 tree ($78.2$) that is more than forty times larger, and above it on selective risk. The medium FERL tree using bootstrapping to learn the fuzzy partition lifts accuracy to $79.2$ at about forty rules. It surpasses most more complex rule-learners while matching CART ($79.8$), even when it is an order of magnitude smaller. The deep FERL variant reaches $83.2$, becoming the best tree/rule-based method both in accuracy and AURC. However, the distance with respect to non-interpretable ensemble classifiers is still significant. It is also remarkable that modern rule learners struggle to compete with CART. NeuRules is the strongest one, matching FERL-medium's accuracy ($79.3$ vs.\ $79.2$) with a smaller rule count but at higher selective risk ($14.1$ vs.\ $12.7$ AURC). RRL ($77.9$) and RL-Net are particularly complex for the accuracy they achieve, and while SamRuLe's sampled rule lists stay compact, they fall well below FERL-compact. FUCS reaches $78.59$ only with rule bases in the thousands. Figure~\ref{fig:frontier} shows the evolution of accuracy and model complexity for all the rule learners involved.

\newcommand{\SigNdatasets}{30}
\newcommand{\SigFriedmanAccP}{1e-12}
\newcommand{\SigFriedmanAurcP}{6e-23}
\newcommand{\SigFerlAccRank}{2.97}
\newcommand{\SigFerlAurcRank}{2.57}

A Friedman test across the 30 tabular datasets rejects equal performance among
the thirteen interpretable methods for both accuracy and AURC ($p<0.001$ in
both cases). FERL-deep obtains the best average rank on both metrics (accuracy
rank \SigFerlAccRank, AURC rank \SigFerlAurcRank), while
Figure~\ref{fig:cd} shows the Nemenyi post-hoc comparison for accuracy.
Complementary Holm-corrected Wilcoxon signed-rank tests across datasets find
FERL-deep statistically indistinguishable in accuracy only from LR and better
than every other rule learner (all $p<0.01$). The complete pairwise accuracy
and AURC results, together with a per-dataset analysis, are provided in the
supplementary material.

\paragraph{Training and Inference Cost}
Timing every method in one process on one CPU, and taking medians over the
thirty datasets, FERL-deep fits in $0.26$\,s: among rule learners only CART
($0.002$\,s) and FIGS ($0.17$\,s) fit faster, while NeuRules ($4.8$\,s),
SamRuLe ($12.9$\,s), FUCS ($140.4$\,s) and RL-Net ($146.8$\,s) are one to three
orders of magnitude slower. The margin is largest at inference:
FERL scores a fold in $1.5$\,ms against $16.8$\,ms for FURIA and $744$\,ms for
FUCS, the other DS fuzzy rule learner.

\begin{table}[t]
\centering
\small
\caption{\textbf{Accuracy and selective-risk AURC.} Computed over thirty tabular benchmarks (mean\,$\pm$\,std over datasets, 0--100 scale). ``Size'' is the number of leaves for trees and rules for rule sets. Bold marks the best result for an interpretable model and underlined the best one for a non-interpretable model.}
\label{tab:frontier}
\begin{tabular}{@{}lccc@{}}
\toprule
Method & Acc.\ $\uparrow$ & Size & AURC $\downarrow$\\
\midrule
\multicolumn{4}{@{}l}{\emph{Trees \& rule learners (general)}}\\
CART & 79.76\,$\pm$\,11.69 & 175 & 19.89\,$\pm$\,10.91\\
C4.5 & 78.19\,$\pm$\,12.34 & 254 & 18.07\,$\pm$\,12.20\\
FIGS & 80.63\,$\pm$\,10.67 & 28 & 15.22\,$\pm$\,10.73\\
RRL & 77.95\,$\pm$\,13.71 & 201 & 13.89\,$\pm$\,11.86\\
RL-Net & 74.81\,$\pm$\,13.00 & 195 & 20.80\,$\pm$\,10.51\\
NeuRules & 79.27\,$\pm$\,11.61 & 19 & 14.07\,$\pm$\,11.10\\
SamRuLe & 70.53\,$\pm$\,12.65 & 15 & 20.07\,$\pm$\,10.18\\
\midrule
\multicolumn{4}{@{}l}{\emph{Fuzzy rule-based}}\\
FURIA & 78.51\,$\pm$\,13.57 & 69 & 17.42\,$\pm$\,13.94\\
FUCS (DS) & 78.59\,$\pm$\,13.61 & 1483 & 17.08\,$\pm$\,12.37\\
\midrule
\multicolumn{4}{@{}l}{\emph{Credal \& evidential}}\\
NCC & 77.97\,$\pm$\,12.63 & 290 & 12.07\,$\pm$\,10.31\\
CDT & 76.60\,$\pm$\,11.50 & 125 & 13.90\,$\pm$\,10.21\\
CDT (C4.5) & 76.60\,$\pm$\,11.50 & 125 & 13.90\,$\pm$\,10.21\\
\midrule
\multicolumn{4}{@{}l}{\emph{FERL family (ours)}}\\
FERL-compact & 77.77\,$\pm$\,11.51 & \textbf{6} & 17.33\,$\pm$\,11.61\\
FERL-medium & 79.21\,$\pm$\,9.82 & 43 & 12.72\,$\pm$\,8.54\\
FERL-deep & \textbf{83.23\,$\pm$\,10.27} & 143 & \textbf{9.21\,$\pm$\,8.33}\\
\midrule
\multicolumn{4}{@{}l}{\emph{Linear Model}}\\
Logistic reg. & 81.13\,$\pm$\,12.66 & -- & 10.59\,$\pm$\,11.05\\
\midrule
\multicolumn{4}{@{}l}{\emph{Non-interpretable methods}}\\
RF & \underline{86.62\,$\pm$\,11.10} & - & \underline{6.26\,$\pm$\,7.96}\\
GB & 85.58\,$\pm$\,10.67 & - & 6.60\,$\pm$\,7.36\\
EDL & 84.71\,$\pm$\,11.15 & -- & 7.30\,$\pm$\,8.12\\
\bottomrule
\end{tabular}
\end{table}

\begin{figure}[ht!]
\centering
\includegraphics[width=0.8\linewidth]{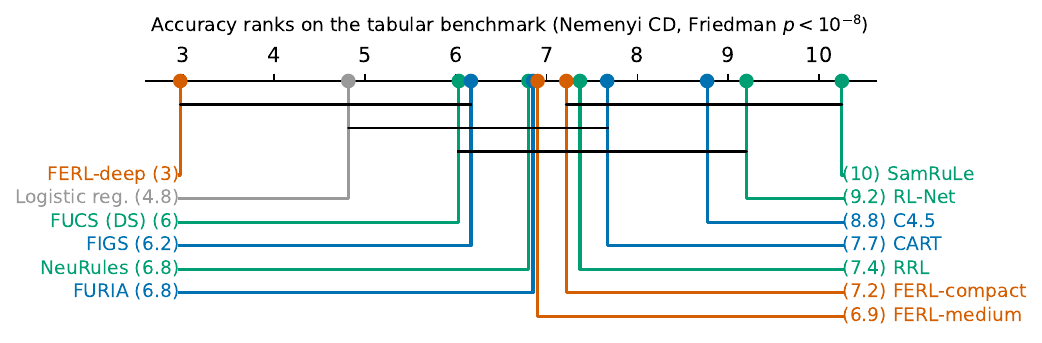}
\caption{\textbf{Critical-difference diagram of accuracy ranks on the tabular
benchmark.} The diagram uses the Nemenyi post-hoc test; methods joined by a bar
are not significantly different. Lower rank is better.}
\label{fig:cd}
\end{figure}

\begin{figure}[ht!]
\centering
\includegraphics[width=0.6\linewidth]{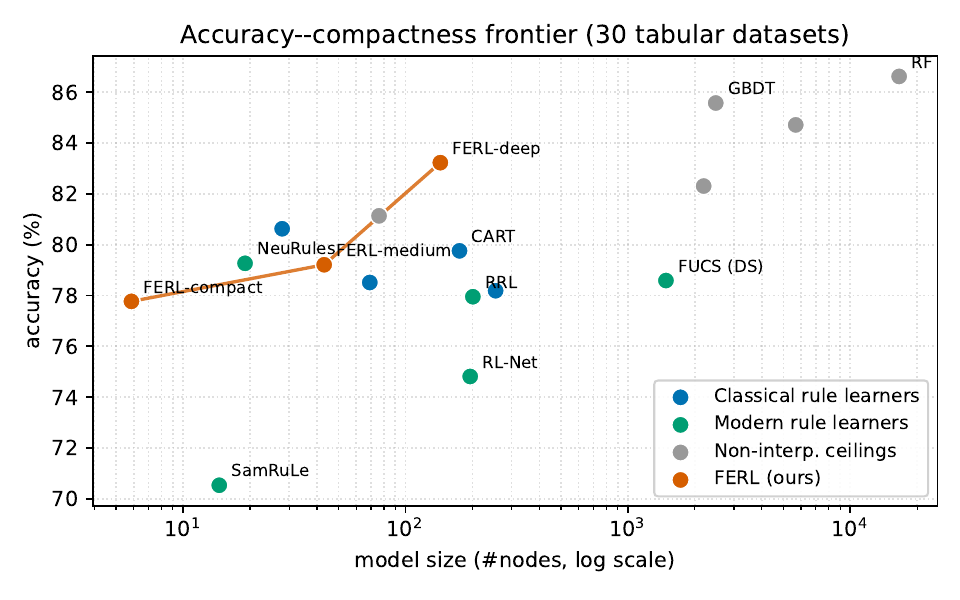}
\caption{\textbf{Accuracy vs.\ model size.} The x-axis corresponds to the log scale of the number of nodes in each tree/rule based classifier, the y axis corresponds to the accuracy obtained by each model.}
\label{fig:frontier}
\end{figure}

\paragraph{Evaluation of Prediction Sets}
\label{sec:results-sets}

Table~\ref{tab:credal} evaluates both native set-valued outputs and post-hoc conformal sets under the utility-discounted accuracy \citep{zaffalon2012evaluating}. FERL-deep obtains the highest utility at both reference discount levels ($u_{65}=0.795$, $u_{80}=0.827$). Best coverage is obtained by MLP-Conformal, which reaches $0.952$, at the cost of the largest prediction sets on average. NCC is the one that obtains the smallest prediction sets. However, this determinism reflects an overconfident bias, as its performance is lower than most rule-learners. Results here show that FERL is capable of very good coverage (close to conformal) while producing significantly smaller sets, without falling into the overconfident estimates of NCC and CDT.
\begin{table}[ht!]
\centering
\small
\setlength{\tabcolsep}{3pt}
\caption{\textbf{Native and conformal set-valued prediction.} Mean\,$\pm$\,std
over the thirty tabular datasets. Determinacy is the singleton fraction,
coverage is empirical, and size is the mean set cardinality. Conformal uses
significance at $\alpha=0.1$; $u_{65}$/$u_{80}$ are utility-discounted
accuracies.}
\label{tab:credal}
\begin{tabular}{@{}lccccc@{}}
\toprule
Method & Determ. & Coverage & Size & $u_{65}$ & $u_{80}$ \\
\midrule
FERL-deep & 0.74\,$\pm$\,0.21 & 0.92\,$\pm$\,0.04 & 1.70\,$\pm$\,1.25 & \textbf{0.80\,$\pm$\,0.14} & \textbf{0.83\,$\pm$\,0.12} \\
FERL-compact & 0.66\,$\pm$\,0.19 & 0.87\,$\pm$\,0.06 & 1.68\,$\pm$\,0.80 & 0.72\,$\pm$\,0.13 & 0.76\,$\pm$\,0.12 \\
\midrule
NCC & \textbf{0.91\,$\pm$\,0.12} & 0.81\,$\pm$\,0.10 & \textbf{1.11\,$\pm$\,0.17} & 0.79\,$\pm$\,0.12 & 0.80\,$\pm$\,0.11 \\
CDT  & 0.88\,$\pm$\,0.08 & 0.82\,$\pm$\,0.08 & 1.19\,$\pm$\,0.19 & 0.78\,$\pm$\,0.11 & 0.79\,$\pm$\,0.10 \\
MLP-Conformal & 0.22\,$\pm$\,0.35 & \textbf{0.95\,$\pm$\,0.05} & 2.17\,$\pm$\,0.84 & 0.63\,$\pm$\,0.16 & 0.74\,$\pm$\,0.14 \\
\bottomrule
\end{tabular}
\end{table}

\paragraph{Out-of-Distribution Detection}

We compare FERL's OOD residual against three dedicated post-hoc OOD detectors: class-conditional Mahalanobis distance \cite{lee2018mahalanobis}, $k$-nearest-neighbour distance \cite{sun2022knnood}, and Isolation Forest
\cite{liu2008isolation}. The experiment setting is leave-one-class-out on the multiclass datasets from the tabular benchmark. In this case, rules are still firing so naive ignorance is uninformative. Table~\ref{tab:ood} shows that FERL-deep's OOD residual signal is marginally better than the strongest dedicated competitor ($77.7$ vs.\ $77.4$ for $k$NN distance) and clearly ahead of Mahalanobis and Isolation Forest, and that it does so with the lowest standard deviation across datasets of any method, i.e.\ it is the most consistent detector of the comparison. FERL-medium is also capable of surpassing class-conditional Mahalanobis and Isolation Forest. The same ranking holds when fixing a threshold. Accepting $95\%$ of retained-class inputs, FERL-deep rejects the largest fraction of novel-class inputs ($36.5\%$), ahead of every dedicated detector.

\begin{table}[ht!]
\centering
\caption{\textbf{OOD-detection: AUROC and rejection.} \emph{Reject} is the fraction of held-out novel-class inputs
rejected at a threshold fixed to accept $95\%$ of retained-class inputs.}
\label{tab:ood}
\begin{tabular}{@{}lcc@{}}
\toprule
Method & AUROC & Reject \\
\midrule
\multicolumn{3}{@{}l}{\emph{FERL}}\\
FERL-compact & 67.41 $\pm$ 7.19 & 24.46 $\pm$ 12.66\\
FERL-medium  & 76.10$\pm$11.26 & 31.82 $\pm$ 16.30\\
FERL-deep    & \textbf{77.66$\pm$10.67} & \textbf{36.47 $\pm$ 16.89}\\
\midrule
\multicolumn{3}{@{}l}{\emph{Native OOD detectors}}\\
Mahalanobis   & 74.42$\pm$13.20 & 30.69 $\pm$ 20.49\\
$k$NN distance     & 77.38$\pm$13.86 & 31.28 $\pm$ 22.10\\
Isolation Forest & 70.33$\pm$11.34 & 25.45 $\pm$ 18.29\\
\midrule
Softmax entropy & 54.18 $\pm$ 7.55 & 4.00 $\pm$ 4.34\\
\bottomrule
\end{tabular}
\end{table}

\subsection{Concept Bottleneck Integration}
We integrate FERL with a CBM by replacing its classification head and training on the symbol detector's output. In the following, we compare it against another tree-based method, CART, and LR, which is the reference model used in CBMs for the explainability of its weights.

\paragraph{Accuracy Results.}
Table~\ref{tab:cbm} reports the accuracy for different classification heads on CUB-20, CUB-200, and AwA2 using the same symbol extractor. We found that per-concept calibration is particularly important on CUB: FERL-deep improves from $77.67\%$ to $86.28\%$ on CUB-20 and from $59.38\%$ to $65.18\%$ on CUB-200. With calibrated concepts it slightly exceeds CART on both tasks, making it the strongest tree-based CBM head in this comparison. Still, standard LR remains the strongest predictor. The nature of this advantage comes from the fact that the LR is effectively using more concepts for each decision. We can see that its advantage disappears when limiting the number of concepts used to the same as FERL. This shows that although the largest LR weights have been typically used as a way to interpret the final decision of the CBM, it is actually a large number of symbols that are required to acquire good performance. On AwA2, all methods perform similarly around $95\%$ accuracy, and the calibration offered no additional gains.

  \begin{table}[t]
	\centering
	\small
	\setlength{\tabcolsep}{4pt}
	\caption{\textbf{Concept-bottleneck accuracy.} ``Raw'' and ``calib.''
		show the effect of per-concept isotonic calibration on the symbol
		extractor. ``Concept-restricted LR'' is an LR head restricted to the
		same number of concepts as the FERL tree.}
	\label{tab:cbm}
	\begin{tabular}{@{}lcc@{}}
		\toprule
		Method & raw & calib.\\
		\midrule
		\multicolumn{3}{@{}l}{\textit{CUB-20}}\\
		FERL-deep              & 77.67\,$\pm$\,4.44 & 86.28\,$\pm$\,1.57\\
		CART                   & 80.32\,$\pm$\,2.23 & 85.57\,$\pm$\,1.29\\
		Concept-restricted LR  & 72.75\,$\pm$\,1.29 & 74.24\,$\pm$\,0.81\\
		LR          & 88.35\,$\pm$\,1.78 & 88.54\,$\pm$\,0.70\\
		\midrule
		\multicolumn{3}{@{}l}{\textit{CUB-200}}\\
		FERL-deep              & 59.38\,$\pm$\,4.57  & 65.18\,$\pm$\,0.77\\
		CART                   & 64.66\,$\pm$\,1.24  & 64.84\,$\pm$\,1.34\\
		Concept-restricted LR  & 49.54\,$\pm$\,24.70 & 58.12\,$\pm$\,1.89\\
		LR         & 70.42\,$\pm$\,1.03  & 70.71\,$\pm$\,1.01\\
		\midrule
		\multicolumn{3}{@{}l}{\textit{AwA2}}\\
		FERL-deep              & 95.08\,$\pm$\,0.11 & 94.91\,$\pm$\,0.14\\
		CART                   & 94.69\,$\pm$\,0.17 & 94.54\,$\pm$\,0.17\\
		Concept-restricted LR  & 95.80\,$\pm$\,0.11 & 95.78\,$\pm$\,0.22\\
		LR          & 95.76\,$\pm$\,0.13 & 95.79\,$\pm$\,0.15\\
		\bottomrule
	\end{tabular}
\end{table}

\paragraph{OOD Detection.}
We next evaluate whether the same concept-level representation supports detection of classes that are entirely unseen at deployment. For each dataset, we partition the classes into five folds and remove the held-out fold before training both the image-to-concept detector and the classification heads. Per-concept isotonic calibration, FERL fitting, and OOD-threshold selection use retained-class data only, and held-out-class test images are used only for final evaluation. Mahalanobis distance, $k$NN distance, and Isolation Forest receive exactly the same calibrated concept vectors as FERL. We first average the five held-out-class folds within each detector seed and then report mean and standard deviation over three detector seeds.

Table~\ref{tab:cbm-open-world-ood} shows that FERL's native residual score is competitive with dedicated OOD detectors without adding a separate detector. On CUB-200, FERL reaches $68.2$ AUROC and $31.0$ AUPR-Out, within $2.3$ and $2.4$ points, respectively, of the strongest dedicated result. On AwA2, it reaches $86.2$ AUROC, within $2.0$ points of $k$NN, while obtaining the best AUPR-Out ($68.3$) and the highest unseen-class rejection at the retained-class validation threshold ($57.2\%$). This operating-point advantage is not uniform: FERL's AwA2 FPR95 is $75.8$, compared with $54.2$--$54.3$ for Mahalanobis and $k$NN. Thus, the residual is also a competitive near-OOD signal at no additional cost within a CBM scheme.

  \begin{table}[ht!]
	\centering
	\footnotesize
	\setlength{\tabcolsep}{2.2pt}
	\caption{\textbf{Detector-class-disjoint open-world CBM detection.}
		Best results per dataset and metric are bold.}
	\label{tab:cbm-open-world-ood}
	\begin{tabular}{@{}lcccc@{}}
		\toprule
		Method & AUROC & AUPR-Out & FPR95 & Reject \\
		\midrule
		\multicolumn{5}{@{}l}{\textbf{CUB-200}} \\
		\addlinespace[1pt]
		FERL-deep
		& 68.2\,$\pm$\,2.4
		& 31.0\,$\pm$\,1.5
		& 82.3\,$\pm$\,2.6
		& 10.2\,$\pm$\,0.3 \\
		\addlinespace[1pt]
		Mahalanobis
		& 70.2\,$\pm$\,2.5
		& 32.3\,$\pm$\,2.4
		& \textbf{73.3\,$\pm$\,3.7}
		& 13.5\,$\pm$\,1.8 \\
		$k$NN distance
		& \textbf{70.5\,$\pm$\,3.0}
		& \textbf{33.4\,$\pm$\,3.1}
		& 74.3\,$\pm$\,5.2
		& \textbf{13.6\,$\pm$\,2.0} \\
		Isolation Forest
		& 58.8\,$\pm$\,1.4
		& 25.6\,$\pm$\,1.5
		& 89.3\,$\pm$\,1.3
		& 8.5\,$\pm$\,1.7 \\
		\midrule
		\multicolumn{5}{@{}l}{\textbf{AwA2}} \\
		\addlinespace[1pt]
		FERL-deep
		& 86.2\,$\pm$\,0.2
		& \textbf{68.3\,$\pm$\,0.5}
		& 75.8\,$\pm$\,1.4
		& \textbf{57.2\,$\pm$\,0.9} \\
		\addlinespace[1pt]
		Mahalanobis
		& 88.0\,$\pm$\,0.8
		& 67.5\,$\pm$\,0.4
		& \textbf{54.2\,$\pm$\,5.4}
		& 56.2\,$\pm$\,0.3 \\
		$k$NN distance
		& \textbf{88.2\,$\pm$\,0.4}
		& 67.7\,$\pm$\,0.4
		& 54.3\,$\pm$\,4.9
		& 53.2\,$\pm$\,1.8 \\
		Isolation Forest
		& 81.8\,$\pm$\,0.4
		& 55.4\,$\pm$\,2.6
		& 78.6\,$\pm$\,1.2
		& 39.1\,$\pm$\,3.7 \\
		\bottomrule
	\end{tabular}
\end{table}

\FloatBarrier
\section{Conclusion}

We presented FERL, a fast and scalable fuzzy rule-learning method in which Dempster--Shafer evidence is the product of the fuzzy memberships that define the rules, and where each node split is computed using bootstrapped samples that create robust fuzzy partitions. FERL provides a high-performance interpretable classifier with set-valued prediction, abstention, and OOD detection, and can target different accuracy--complexity trade-offs through its rule budget. In a 30-dataset tabular benchmark, the medium-sized FERL tree is competitive with classical and recent rule learners while using fewer rules than most of them, and the deep FERL tree surpasses every other single rule learner with statistical significance. Its native residual also matches or closely approaches dedicated OOD detectors without fitting a separate detector. Within a CBM, FERL remains competitive as a classification head and carries the same OOD mechanism into a detector-class-disjoint open-world setting: it stays within $2.3$ AUROC points of the strongest dedicated detector on CUB and AwA2, while attaining the best AwA2 AUPR-Out and unseen-class rejection. These results show that a single concept-level FERL head can provide auditable rules, evidential abstention, and attribute-level novelty diagnostics without an auxiliary uncertainty model.

\bibliography{refs,human}

\clearpage
\appendix
\renewcommand{\headeright}{Supplementary Material}
\section*{Supplementary Material}

\begin{center}
\small
\begin{tabular}{@{}p{0.86\columnwidth}r@{}}
\multicolumn{2}{c}{\bfseries Supplementary Contents}\\[2pt]
\ref{app:construction-guarantees}. Tree Construction Guarantees and a Confidence-Certified Band
    & p.~\pageref{app:construction-guarantees}\\
\ref{app:complexity}. Computational Complexity
    & p.~\pageref{app:complexity}\\
\ref{app:ablations}. Ablation Studies
    & p.~\pageref{app:ablations}\\
\ref{app:tabular-significance}. Statistical Comparison on the Tabular Benchmarks
    & p.~\pageref{app:tabular-significance}\\
\ref{app:frontier-nuance}. Where FERL Wins and Loses: A Per-Dataset Study
    & p.~\pageref{app:frontier-nuance}\\
\ref{app:covariate-shift}. Coverage Under Covariate Shift
    & p.~\pageref{app:covariate-shift}\\
\ref{app:repro}. Reproducibility
    & p.~\pageref{app:repro}\\
\end{tabular}
\end{center}

\section{Tree Construction Guarantees and a Confidence-Certified Band}
\label{app:construction-guarantees}

The learned FERL tree uses the same greedy search pattern as a classical tree,
but its fuzzy routing allows us to characterise its behaviour in different ways than a crisp tree like CART or C4.5. This section discusses these, while reasoning why they are relevant for FERL's behaviour.

\subsection{Guaranteed Gini Improvement}

Consider a node with non-negative incoming sample weights $w_i$, total weight
$W=\sum_i w_i>0$, and class distribution $p$. A complementary fuzzy split has
left memberships $\mu_i\in[0,1]$ and right memberships $1-\mu_i$. Let
$W_L=\sum_i w_i\mu_i$, $W_R=\sum_i w_i(1-\mu_i)$,
$\alpha=W_L/W$, and let $p_L,p_R$ be the corresponding child class
distributions. Write $G(q)=1-\lVert q\rVert_2^2$ for Gini impurity.

\paragraph{Proposition 1 (descent of the construction objective).}
For every complementary fuzzy split with non-empty children,
\begin{equation}
G(p)-\alpha G(p_L)-(1-\alpha)G(p_R)
=\alpha(1-\alpha)\lVert p_L-p_R\rVert_2^2\geq 0.
\label{eq:fuzzy-gini-descent}
\end{equation}
The decrease is strict exactly when both children have positive weight and
their class distributions differ.

\paragraph{Proof.}
Complementarity gives $W_L+W_R=W$ and
$p=\alpha p_L+(1-\alpha)p_R$. Expanding the squared norm of this convex
combination gives
$\alpha\lVert p_L\rVert_2^2+(1-\alpha)\lVert p_R\rVert_2^2-
\lVert p\rVert_2^2=
\alpha(1-\alpha)\lVert p_L-p_R\rVert_2^2$, which is
Equation~\eqref{eq:fuzzy-gini-descent}. \hfill$\square$

\paragraph{Practical Implications.} This property is relevant to guarantee that our splits will always get Gini improvement no matter the split technique used. This is different, for example, from CART, where its splitting method guarantees one-step optimality by checking all possible candidates.

\subsection{Routing-Mass Conservation}

For a test point $x$, let $\phi_o(x)$ be the firing that reaches node $o$,
with $\phi_{\mathrm{root}}(x)=1$. The factor $\gamma_o \in [0,1]$ penalises the membership for being out of the known data manifold. At an internal node, write $\mu_o(x)$ for the left membership and $1-\mu_o(x)$ for the right membership.

\paragraph{Proposition 2 (support conservation in in-distribution samples).}
If every $\gamma$ equals one for sample $x$, then the leaf firings of a finite binary
learned tree form a partition of unity:
\begin{equation}
\sum_{\ell\in\mathcal{L}(T)}\phi_\ell(x)=1.
\label{eq:routing-conservation}
\end{equation}

\paragraph{Proof.}
The two children of $o$ receive
$\phi_o\mu_o$ and $\phi_o(1-\mu_o)$, whose sum is $\phi_o$.
Replacing an internal-node firing by its two child firings therefore preserves
the total. Recursing from the root proves
Equation~\eqref{eq:routing-conservation}. \hfill$\square$

Note that in samples where some values are outside the known data manifold, at least some $\gamma>0$. Then, there is a loss of mass in the routing that has this decomposition:
\begin{equation}
1-\sum_{\ell\in\mathcal{L}(T)}\phi_\ell(x)
=\sum_{o\in\mathcal{I}(T)}\phi_o(x)\bigl(1-\gamma_o(x)\bigr),
\label{eq:support-deficit}
\end{equation}
where $\mathcal{I}(T)$ is the set of internal nodes and $\phi_o$ is the firing
before applying $\gamma_o$. This follows by the same  argument, because the children now sum to $\phi_o\gamma_o$. \paragraph{Practical Implications.} Equation~\eqref{eq:support-deficit} is what makes geometric novelty attributable to a feature: each summand names the node and split feature responsible for the loss of mass.

\subsection{Continuity and Local Prediction Stability}

On an in-support region, we define:
\begin{equation}
P_T(x)=\sum_{\ell\in\mathcal{L}(T)}\phi_\ell(x)p_\ell,
\end{equation}
which is a probability vector. The ramp at node $o$ is
$1/(2h_o)$-Lipschitz. For the $\ell_\infty$ input norm and $\ell_1$ output
norm, define
\begin{equation}
L_T=\max_{\pi\in\mathrm{paths}(T)}\sum_{o\in\pi}\frac{1}{h_o}.
\label{eq:tree-lipschitz}
\end{equation}

\paragraph{Proposition 3 (local robustness certificate).}
On any region where the support gates remain one,
$\lVert P_T(x)-P_T(x')\rVert_1\leq L_T\lVert x-x'\rVert_\infty$.
If $c^*=\arg\max_c P_T(x)_c$ is unique and
\begin{equation}
\Delta(x)=P_T(x)_{c^*}-\max_{c\neq c^*}P_T(x)_c,
\end{equation}
then the predicted class is unchanged for every
\begin{equation}
\lVert x-x'\rVert_\infty<\frac{\Delta(x)}{2L_T}.
\label{eq:local-certificate}
\end{equation}

\paragraph{Proof.}
At a node, the subtree prediction is
$P=\mu P_L+(1-\mu)P_R$. Since probability vectors are at most $2$ apart in
$\ell_1$, the change due to the ramp is bounded by
$2\cdot(1/(2h_o))\lVert x-x'\rVert_\infty$.
\paragraph{Full proof of the pignistic stability bound (Proposition~1 of the main paper).}
Let $K=\max_x|A(x)|$ be the maximum number of evidence sources combined for one
input. Each membership is $\lambda$-Lipschitz and depends on a single feature, so a
firing strength $\phi_o$, a product of at most $D$ of them, has $\ell_1$
gradient norm at most $D\lambda$ (matching the $\ell_\infty$ norm on the input)
and lies in $[0,1]$. The unnormalised masses $\tilde m(\{c\})$ and
$\tilde m(\Theta)$ are products and differences of at most $K$ factors of the
form $1-\phi_o+\phi_o p_o(c)\in[0,1]$; a product of $K$ factors, each bounded by
one and $D\lambda$-Lipschitz, is $KD\lambda$-Lipschitz, hence both masses are
$O(KD\lambda)$-Lipschitz. Now
$\mathrm{BetP}(c)=\big(\tilde m(\{c\})+\tilde m(\Theta)/C\big)/Z$ is a ratio
whose numerator is $O(KD\lambda)$-Lipschitz and bounded in $[0,1]$, and whose
denominator $Z=\sum_c\tilde m(\{c\})+\tilde m(\Theta)$ is itself
$O(CKD\lambda)$-Lipschitz and bounded in $[\tau,1]$ on
$\Omega_\tau=\{x:Z(x)\ge\tau\}$; the quotient rule then bounds each
$\|\nabla\mathrm{BetP}(c)\|_1$ by $O(KD\lambda/\tau)$ up to a $C$-dependent
factor, and summing over the $C$ classes yields the stated bound with the
dependence on $C$ absorbed into $\kappa$. \hfill$\square$

\paragraph{Practical Implications.} This proposition allows us to do two things. First, we can know how robust FERL is to noise. If $L_T$ is small we know that small changes are less likely to affect FERL. Second, we can use this property to know when two samples are going to have the same predictions. The most straightforward use of this proposition would be to build a cache-with-guarantees for FERL in order to save time in inferences for very large datasets or FERL models.

\section{Computational Complexity}
\label{app:complexity}

\begin{algorithm}[!t]
	\caption{FERL: fuzzy-tree growth and evidential inference.}
	\label{alg:ferl}
	\begin{algorithmic}[1]
		\STATE \textbf{Training input:} data $\{(x_i,y_i)\}_{i=1}^{N}$, fuzzy partitions, budget $R$, depth $D$, patience, gain function and min.\ gain value $\delta$
		\STATE initialise $T$ with a single root holding all samples at membership $1$
		\WHILE{$\mathrm{rules}(T)<R$ and best achievable coverage $\ge$ threshold}
		\FORALL{nodes $o$ in $T$ with depth $<D$}
		\STATE score every candidate split of $o$ (feature $\times$ fuzzy set, or a learned threshold $\theta$ with band $h$) by its gain \COMMENT{$\theta$: bootstrap median, $h=\gamma\cdot$MAD}
		\ENDFOR
		\STATE $(o^\star,s^\star,g^\star)\gets$ split with the largest gain
		\IF{$g^\star\le\delta$}
		\STATE increment patience counter; \textbf{break} if exhausted
		\ENDIF
		\STATE install $s^\star$ at $o^\star$; update firings $\phi_o$ and consequents $p_o$
		\ENDWHILE
		\STATE optionally apply cost-complexity pruning
		\STATE \textbf{return} fuzzy rule tree $T$
	\end{algorithmic}
	\rule{\linewidth}{0.4pt}
	\begin{algorithmic}[1]
		\STATE \textbf{Inference input:} sample $x$, tree $T$, inference mode (depending on compact / deep tree)
		\STATE $A(x)\gets$ active leaves if deep, else all active nodes
		\FORALL{$o\in A(x)$}
		\STATE $m_o(\{c\})\gets\phi_o(x)\,p_o(c)$;\quad $m_o(\Theta)\gets 1-\phi_o(x)$
		\ENDFOR
		\STATE $\tilde m(\Theta)\gets\prod_{o\in A(x)}(1-\phi_o)$ \COMMENT{Dempster, commonality form}
		\STATE $\tilde m(\{c\})\gets\prod_{o\in A(x)}\!\big(1-\phi_o+\phi_o p_o(c)\big)-\tilde m(\Theta)$
		\STATE normalise by $Z=\tilde m(\Theta)+\sum_c\tilde m(\{c\})$ to obtain $m$
		\STATE $\mathrm{Bel}(c)\gets m(\{c\})$;\quad $\mathrm{Pl}(c)\gets m(\{c\})+m(\Theta)$
		\STATE $\hat y\gets\arg\max_c\mathrm{BetP}(c)$;\quad native set $S(x)\gets\{c:\mathrm{Pl}(c)\ge\max_{c'}\mathrm{Bel}(c')\}$
		\STATE \textbf{return} $\hat y$; native set $S(x)$; ignorance $m(\Theta)$
	\end{algorithmic}
\end{algorithm}

We analyse the time and space complexity of FERL for both training and
inference. Table~\ref{tab:complexity-notation} separates the rule count reported
in the experiments from the total number of stored nodes and from the number of
nodes used as evidence. We treat one membership evaluation on one feature value
as $O(1)$.

\begin{table}[t]
\centering
\small
\begin{tabular}{cl}
\toprule
Symbol & Meaning \\
\midrule
$N$   & training samples \\
$N_s$ & samples used to score a split \\
$n_s$ & split-scoring subsample cap (default $10{,}000$) \\
$n$   & samples in the inference batch \\
$d$   & features \\
$C$   & classes \\
$R$   & terminal leaves (rules), $R\le$ \texttt{max\_rules} \\
$J$   & total stored nodes \\
$K$   & maximum evidence sources combined for one input \\
$D$   & maximum depth \\
$P_{\max}$ & maximum fuzzy sets on any feature (fixed-partition mode) \\
$B$   & bootstrap resamples for the band width \\
\bottomrule
\end{tabular}
\caption{\textbf{Notation used in the complexity analysis.} If split subsampling is
enabled and $N>n_s$, then
$N_s=\min(N,n_s)$; otherwise $N_s=N$. Under the
default automatic policy, subsampling is enabled only when
$N>50{,}000$. Also $K\le J$; in leaves-only inference, $K\le R$.}
\label{tab:complexity-notation}
\end{table}

\subsection{Tree Growth}

FERL grows greedily: each iteration scores every possible candidate split at every node, executes the single best split, and repeats until the leaf/rule budget $R$, the depth limit $D$, the minimum coverage, or the patience counter is exhausted  (Algorithm~\ref{alg:ferl}). A successful iteration creates one
new node, so there are at most $J-1$ successful node creation steps. We use
$J$ for counts below.

Before tree growth, the $B$ bootstrap resamples are generated once and stored
as multiplicity vectors over the split-scoring observations. The values of each
feature are also sorted once, at a preprocessing cost of
$O(dN_s\log N_s)$. Restricting a precomputed feature ordering to a node's
effective region preserves that ordering, while bootstrap multiplicities change
only the observation weights and not their order. Consequently, neither the
nodes nor the bootstrap replicates require additional sorting.

For each node--feature pair, the cuts associated with the $B$ bootstrap replicates are obtained by sweeping the fixed feature ordering and computing cumulative class weights. Each sweep costs $O(N_sC)$, giving $O((B+1)N_sC)$ per node--feature pair and $O(d(B+1)N_sC)$ per node. The bootstrap-generation cost is $O(BN_s)$. Therefore, the total training cost is

\[
T_{\mathrm{grow}}^{\mathrm{learned}}
=
O\!\left(
dN_s\log N_s
+ BN_s
+ Jd(B+1)N_sC
\right).
\]
For $B\geq1$, this simplifies to
\[
T_{\mathrm{grow}}^{\mathrm{learned}}
=
O\!\left(
dN_s\log N_s
+ JdBN_sC
\right).
\]

\subsection{Space}

The fitted model stores $J$ nodes with a $C$-vector class distribution and split
parameters, plus per-node diagonal statistics over up to $d$ free features for
the OOD score. Including the fixed fuzzy partitions, model space
is therefore $O\!\big(J(C+d)+dP_{\max}\big)$; without the residual statistics it
reduces to $O(JC+dP_{\max})$. Inference allocates the activation matrix at
$O(nJ)$ and output arrays at $O(nC)$.

\subsection{Wall-Clock Runtime}

Table~\ref{tab:runtime} complements the asymptotic analysis with a fresh
wall-clock scaling comparison. We fit every method from scratch on ten
representative datasets using five seeded stratified train/test splits. The
datasets span $178$--$5{,}300$ samples, $2$--$57$ features, and $2$--$7$
classes, and are ordered by the size of their input matrix. FERL-compact and FERL-medium remain below $0.15$ and $0.40$ seconds, respectively, on every dataset. FERL-deep grows more nodes and therefore takes up to $2.65$ seconds. On the two largest input datasets (\emph{segment} and \emph{spambase}), FERL-deep is much faster to fit than C4.5 and FURIA, while CART remains the fastest method overall.
\begin{table*}[t]
\centering
\small
\caption{\textbf{Median fit time in seconds.} Across 5 seeded stratified 70/30 train/test splits. Datasets are ordered by their size.}
\begin{tabular}{@{}lrrrrrrrrrr@{}}
\toprule
 & & & & \multicolumn{3}{c}{FERL fit time (s)} & \multicolumn{4}{c}{Baseline fit time (s)} \\
\cmidrule(lr){5-7}\cmidrule(l){8-11}
Dataset & $N$ & $d$ & $C$ & compact & medium & deep & CART & C4.5 & FIGS & FURIA \\
\midrule
glass & 205 & 9 & 5 & 0.022  & 0.023  & 0.178  & 0.001 & 0.141 & 0.074 & 0.365 \\
wine & 178 & 13 & 3 & 0.012  & 0.015  & 0.262  & 0.001 & 0.065 & 0.009 & 0.315 \\
heart & 270 & 13 & 2 & 0.016  & 0.018  & 0.105  & 0.001 & 0.049 & 0.135 & 0.166 \\
australian & 690 & 14 & 2 & 0.006  & 0.013  & 0.166  & 0.002 & 0.200 & 0.096 & 0.326 \\
banana & 5300 & 2 & 2 & 0.017  & 0.025  & 1.093  & 0.005 & 2.329 & 0.251 & 1.330 \\
ionosphere & 351 & 33 & 2 & 0.023  & 0.053  & 0.756  & 0.003 & 0.527 & 0.039 & 0.714 \\
vehicle & 846 & 18 & 4 & 0.070  & 0.073  & 1.021  & 0.003 & 1.326 & 0.142 & 1.130 \\
german & 1000 & 20 & 2 & 0.006  & 0.010  & 0.161  & 0.002 & 0.493 & 0.191 & 0.362 \\
segment & 2310 & 19 & 7 & 0.141  & 0.394  & 2.014  & 0.008 & 12.742 & 0.199 & 2.641 \\
spambase & 4597 & 57 & 2 & 0.142  & 0.188  & 2.652  & 0.036 & 21.061 & 0.718 & 5.476 \\
\bottomrule
\end{tabular}

\label{tab:runtime}
\end{table*}

\section{Ablation Studies}
\label{app:ablations}

\subsection{Evidence Sources and Combination Rule}
\label{app:decision-rule-ablation}

The evidential decision involves two choices: which tree nodes provide evidence
and how their masses are combined. We isolate both choices without refitting the
tree. First, we compare all non-root nodes with leaves only under Dempster's
rule. Second, we also compare Dempster's rule with Den{\oe}ux's cautious rule \cite{denoeux2008cautious}, which and was
designed for non-distinct evidence sources \cite{denoeux2008cautious}. For a
leaf $o$ with firing $\phi_o$ and consequent $p_o(c)$, its singleton canonical
weight is
\begin{equation}
w_o(c)=\frac{1-\phi_o}
{1-\phi_o+\phi_o p_o(c)}.
\end{equation}
The cautious combination uses $w(c)=\min_o w_o(c)$, so repeated or dependent
support does not compound. The pignistic point decision and interval-dominance
set
$S(x)=\{c:\mathrm{Pl}(c)\geq\max_j\mathrm{Bel}(j)\}$ are otherwise unchanged.

\begin{table*}[t]
\centering
\footnotesize
\setlength{\tabcolsep}{3.5pt}
\caption{\textbf{Evidence-source and combination-rule ablation for FERL-deep.} All read-outs use the same fitted tree. The first two rows isolate all-node versus leaves-only evidence under Dempster's rule; the last two isolate Dempster versus Den{\oe}ux's cautious rule over the same leaves (30 datasets, 5 folds). All entries except mean set size are on a 0--100 scale. Lower AURC/ECE is better; higher accuracy, coverage, and $u_{65}$ is better.}
\label{tab:decision-rule-ablation}
\begin{tabular}{@{}lcccccccc@{}}
\toprule
Evidence/read-out & Acc.\ $\uparrow$ & AURC $\downarrow$ & ECE $\downarrow$ & Determ.\ & Set cov.\ & Size & $u_{65}$ $\uparrow$ & Ign.\\
\midrule
Dempster, all nodes & 80.62 & 10.30 & \textbf{14.88} & 99.88 &80.69 & 1.00 & \textbf{80.65} & 0.07\\
Dempster, leaves &\textbf{83.18} & \textbf{10.14} & 15.83 & 74.26 & 92.31 & 1.70 & 79.50 & 22.93\\
Cautious, leaves & 69.58 & 17.76 & 22.09 & 24.04 & \textbf{96.78} & 3.05 & 59.62 & 37.19\\
\bottomrule
\end{tabular}
\end{table*}

Table~\ref{tab:decision-rule-ablation} shows why FERL-deep uses leaves only.
All-node Dempster combination drives mean ignorance to $0.07\%$ and produces
$99.88\%$ singleton sets, so its set coverage ($80.69\%$) nearly reduces to
point accuracy. Restricting the sources to leaves raises ignorance to $22.93\%$
and set coverage to $92.31\%$, while also increasing accuracy from $80.62\%$ to
$83.18\%$ and slightly lowering AURC. This non-degenerate credal output costs
$0.95$ ECE points and $1.15$ $u_{65}$ points relative to the collapsed
all-node read-out.

The cautious rule is more conservative still: relative to Dempster over the
same leaves, coverage rises from $92.31\%$ to $96.78\%$ and ignorance from
$22.93\%$ to $37.19\%$. However, mean set size grows from $1.70$ to $3.05$,
determinacy falls from $74.26\%$ to $24.04\%$, accuracy falls by $13.60$ points,
and both AURC and ECE worsen. Leaves-only Dempster therefore provides the best
balance here: it removes direct ancestor--descendant dependence,
while cautious combination suppresses useful information.

\subsection{Bounded-Support}
\label{app:bounded-support-ablation}

FERL-deep bounds each membership using the feature range observed at that
node. The gate equals one within the node-local training range and decays to
zero over a margin of one such range on either side. Because this gate is
applied only during inference, it can be switched on or off for the same fitted
tree, isolating the effect of bounded support from tree induction.

\begin{table*}[t]
\centering
\footnotesize
\setlength{\tabcolsep}{4pt}
\caption{\textbf{Fuzzy bounded-support ablation.} The fitted tree is unchanged; the gate is toggled only during inference. ID columns use 30 datasets and 3 folds; geometric-OOD columns use 7 datasets and 2 seeds. The bounded row uses the default margin of one node-local training range. All entries except mean set size are on a 0--100 scale. Higher AUROC is better; $\bar{\Phi}_{\mathrm{OOD}}$ is expected to be small for off-support inputs.}
\label{tab:bounded-support-ablation}
\begin{tabular}{@{}lccccccc@{}}
\toprule
& \multicolumn{3}{c}{In distribution} & \multicolumn{4}{c}{Geometric OOD}\\
\cmidrule(lr){2-4}\cmidrule(lr){5-8}
Support gate & Acc.\ $\uparrow$ & Set cov. & Size & Firing AUROC $\uparrow$ & Ign.\ AUROC $\uparrow$ & $\bar{\Phi}_{\mathrm{ID}}$ & $\bar{\Phi}_{\mathrm{OOD}}$\\
\midrule
Unbounded  & 82.00 & 90.65 & 1.31 & 47.03 & 23.19 & 100.00 & 100.00\\
Bounded & 81.99 & 90.65 & 1.31 & 99.29 & 98.62 & 99.67 & 7.28\\
\bottomrule
\end{tabular}
\end{table*}

Table~\ref{tab:bounded-support-ablation} separates in-distribution behaviour from
geometric-OOD detection. Across 30 datasets and three folds, enabling the gate
changes mean accuracy from $82.00\%$ to $81.99\%$, while set coverage remains
$90.65\%$ and mean set size remains $1.31$. On the seven-dataset geometric-OOD
study, firing-based AUROC rises from $47.03\%$ to $99.29\%$ and
ignorance-based AUROC from $23.19\%$ to $98.62\%$. Mean firing remains
$99.67\%$ in distribution but falls to $7.28\%$ for OOD inputs. Thus, the
bounded gate adds an off-support signal without materially changing the
in-distribution operating point. Note that this result concerns geometric support shift only.
It does not imply detection of semantically novel classes that remain within
the observed feature support.

\subsection{Split Criterion: Weighted Gini versus CCI}
\label{app:cci-gini-ablation}

FERL-deep uses weighted Gini to select learned thresholds, whereas the compact
fixed-partition variants use the Complete Classification Index (CCI). We refit
FERL-deep under each criterion using the same five outer folds, the same
$25\%$ calibration reserve within each outer-training fold, and the same
maximum depth. Only the split criterion changes.

\begin{table}[t]
\centering
\small
\caption{\textbf{Split-criterion ablation for FERL-deep.} Weighted Gini vs. the Complete Classification Index. For the learned-threshold deep tree, CCI wins on only 6 of $30$ datasets and is 0.71 points lower on average, on both the binary and multiclass subsets (Pearson $r{=}-0.07$ between $\Delta$ and class count $C$).}
\label{tab:cci-gini-learned}
\begin{tabular}{@{}lcccc@{}}
\toprule
Subset & \# & Gini & CCI & $\Delta$\\
\midrule
Binary ($C{=}2$) & 18 & \textbf{83.32} & 82.45 & -0.87\\
Multiclass ($C{>}2$) & 12 & \textbf{83.09} & 82.63 & -0.46\\
\midrule
\emph{All} & 30 & \textbf{83.23} & 82.52 & -0.71\\
\bottomrule
\end{tabular}
\end{table}

Weighted Gini obtains $83.23\%$ mean accuracy, compared with $82.52\%$ for CCI.
CCI wins on six datasets, ties on one, and loses on 23. Its mean change is
$-0.87$ points on binary datasets and $-0.46$ points on multiclass datasets.
The advantage of CCI for compact fixed partitions therefore does not hold
to this learned-threshold, depth-12 setting, supporting weighted Gini as the
FERL-deep split metric.

\subsection{Prefixed and Split-Driven Fuzzy Partitions}
\label{app:refit}

FERL can use fixed fuzzy partitions for better interpretability, or when provided by an expert. This subsection tests when each choice is more suitable. For that, we fit a FERL-compact with a fixed partition scheme using trapezoidal fuzzy sets based on the quartiles of the distribution. Then, we freeze its topology and its node consequents, and optimise only the trapezoidal breakpoints by gradient descent on training cross-entropy. We report the change in test accuracy, the change in ECE, and the \emph{partition drift}: the mean per-set breakpoint movement as a fraction of the feature range.

Across all tabular datasets tested, the refit changes test accuracy by a mean of $+0.96$ points and a median of $0.0$ and it changes ECE by $-0.002$ on average. The partitions barely move: mean drift $1.3\%$ of the feature range, median $0.2\%$.  So, for the smallest configuration tested (FERL-compact), the pre-computed fixed partition already sits near a local optimum of the membership parameters: globally refitting them, even under favourable model selection, neither reliably helps accuracy nor improves calibration.

\section{Statistical Comparison on the Tabular Benchmarks}
\label{app:tabular-significance}

The critical-difference diagram in the main paper summarises the global
accuracy ranking. Table~\ref{tab:sig} provides the complementary pairwise
analysis, comparing FERL-deep with each interpretable and rule-learning
baseline across the 30 tabular datasets. The tests pair the five-fold mean for
each method within each dataset; cross-validation folds are not treated as
independent observations. The pairwise results refine this global view:
FERL-deep is statistically indistinguishable in accuracy from LR, but
significantly outperforms every other rule learner.

\begin{table}[h!]
\centering
\small
\caption{\textbf{Wilcoxon signed-rank comparisons on the tabular benchmark.} Pairwise comparison of \emph{FERL-deep} against every interpretable and rule-learning baseline across 30 datasets. Entries are mean paired differences: $\Delta$Acc.\ $>0$ and $\Delta$AURC $<0$ favour FERL. Accuracy $p$-values are Holm-corrected across the comparison family; AURC significance markers use unadjusted $p$-values. $^{*}p<0.05$, $^{**}p<0.01$, $^{***}p<0.001$.}
\label{tab:sig}
\begin{tabular}{@{}lcc@{}}
\toprule
vs.\ baseline & $\Delta$Acc.\ $\uparrow$ & $\Delta$AURC $\downarrow$\\
\midrule
CART & +3.47$^{***}$ & -10.68$^{***}$\\
C4.5 & +5.04$^{***}$ & -8.87$^{***}$\\
FIGS & +2.60$^{***}$ & -6.01$^{***}$\\
Logistic reg. & +2.09 & -1.38\\
FURIA & +4.72$^{***}$ & -8.21$^{***}$\\
FUCS (DS) & +6.03$^{**}$ & -9.17$^{***}$\\
RRL & +5.28$^{**}$ & -4.68$^{***}$\\
RL-Net & +8.42$^{***}$ & -11.59$^{***}$\\
NeuRules & +3.96$^{***}$ & -4.87$^{***}$\\
SamRuLe & +12.70$^{***}$ & -10.86$^{***}$\\
FERL-compact & +5.46$^{***}$ & -8.13$^{***}$\\
FERL-medium & +4.02$^{***}$ & -3.51$^{***}$\\
\bottomrule
\end{tabular}
\end{table}

\section{Where FERL Wins and Loses: A Per-Dataset Study}
\label{app:frontier-nuance}

We examine the aggregate tabular comparison in greater detail by focusing on
FERL-deep, FERL-medium, LR, and FIGS. The Wilcoxon comparison in
Table~\ref{tab:sig} finds no significant accuracy difference between FERL-deep
and LR, despite their different inductive biases. Table~\ref{tab:supp-frontier-nuance}
therefore reports their per-dataset accuracies, together with those of
FERL-medium and FIGS, sorted by the FERL-deep$-$LR gap. This analysis identifies
the datasets on which additive linear reasoning is better suited and those on
which FERL's rule-based nonlinear boundaries are advantageous.

\begin{table*}[t]
\centering
\footnotesize
\caption{\textbf{Per-dataset accuracy on the 30-dataset tabular benchmark, four representative frontier methods.} Mean over the $5$ folds; $n$, $d$ and $C$ are the number of samples, raw features and classes. Rows are sorted by $\Delta = \text{FERL-deep} - \text{LR}$ ($>0$ favours FERL).}
\label{tab:supp-frontier-nuance}
\begin{tabular}{@{}lrrr cccc |r@{}}
\toprule
 & & & & \multicolumn{2}{c}{FERL} & & & \\
Dataset & $n$ & $d$ & $C$ & deep & medium & LR & FIGS & $\Delta$\\
\midrule
banana & 5300 & 2 & 2 & \textbf{89.8} & 77.0 & 57.1 & 89.1 & +32.8\\
vowel & 990 & 13 & 11 & \textbf{79.0} & 69.7 & 65.9 & 64.9 & +13.1\\
ring & 7400 & 20 & 2 & 88.4 & 82.1 & 75.8 & \textbf{88.4} & +12.5\\
phoneme & 5404 & 5 & 2 & \textbf{86.5} & 80.2 & 75.3 & 83.7 & +11.2\\
glass & 214 & 9 & 6 & \textbf{69.8} & 61.0 & 59.5 & 68.8 & +10.2\\
ecoli & 336 & 7 & 8 & \textbf{82.6} & 80.1 & 75.2 & 78.9 & +7.4\\
satimage & 6435 & 36 & 6 & \textbf{87.1} & 80.6 & 80.0 & 84.7 & +7.1\\
magic & 19020 & 10 & 2 & \textbf{86.0} & 82.2 & 79.1 & 84.5 & +6.9\\
contraceptive & 1473 & 9 & 3 & 53.5 & 50.2 & 50.8 & \textbf{53.5} & +2.7\\
ionosphere & 351 & 33 & 2 & \textbf{89.5} & 86.3 & 87.2 & 86.9 & +2.3\\
penbased & 10992 & 16 & 10 & \textbf{96.1} & 88.0 & 94.1 & 87.1 & +2.0\\
bupa & 345 & 6 & 2 & \textbf{68.1} & 67.0 & 66.7 & 62.9 & +1.4\\
australian & 690 & 14 & 2 & \textbf{85.9} & 85.5 & 84.8 & 84.8 & +1.2\\
wine & 178 & 13 & 3 & \textbf{94.4} & 89.3 & 93.3 & 87.1 & +1.1\\
mammographic & 830 & 5 & 2 & \textbf{83.1} & 82.4 & 82.2 & 80.8 & +1.0\\
crx & 653 & 15 & 2 & \textbf{86.4} & 86.4 & 86.1 & 84.5 & +0.3\\
segment & 2310 & 19 & 7 & 95.4 & 93.2 & 95.2 & \textbf{95.7} & +0.2\\
pima & 768 & 8 & 2 & \textbf{76.3} & 74.5 & 76.3 & 74.5 & +0.0\\
wisconsin & 683 & 9 & 2 & 95.8 & 94.4 & \textbf{96.6} & 94.4 & -0.9\\
spectfheart & 267 & 44 & 2 & 75.7 & 75.6 & \textbf{76.8} & 71.5 & -1.1\\
spambase & 4597 & 57 & 2 & 91.6 & 87.1 & \textbf{92.8} & 91.7 & -1.2\\
wdbc & 569 & 30 & 2 & 93.7 & 91.4 & \textbf{95.1} & 92.6 & -1.4\\
german & 1000 & 20 & 2 & 73.3 & 71.4 & \textbf{76.0} & 70.7 & -2.7\\
balance & 625 & 4 & 3 & 83.4 & 76.6 & \textbf{87.0} & 83.4 & -3.7\\
texture & 5500 & 40 & 11 & 93.2 & 87.4 & \textbf{97.5} & 87.5 & -4.3\\
saheart & 462 & 9 & 2 & 66.2 & 70.8 & \textbf{71.2} & 64.3 & -5.0\\
optdigits & 5620 & 64 & 10 & 90.9 & 77.5 & \textbf{96.1} & 82.8 & -5.2\\
heart & 270 & 13 & 2 & 77.8 & 76.7 & \textbf{83.3} & 74.8 & -5.6\\
vehicle & 846 & 18 & 4 & 71.9 & 68.3 & \textbf{79.6} & 70.6 & -7.7\\
twonorm & 7400 & 20 & 2 & 85.8 & 83.4 & \textbf{97.7} & 93.5 & -11.9\\
\midrule
Mean & & & & 83.2 & 79.2 & 81.1 & 80.6 & +2.1\\
\bottomrule
\end{tabular}
\end{table*}

\paragraph{The delta between FERL-deep\,/\,LR is structured.}
FERL-deep wins on $17$ datasets, ties on $1$ and loses on $12$; the raw
Wilcoxon $p$ is $0.31$. But the sign of the gap is far from random with respect to dataset geometry. The per-dataset advantage  $\Delta=\text{FERL-deep}-\text{LR}$ correlates \emph{negatively} with input dimensionality (Spearman $\rho=-0.37$, $p=0.05$): FERL's edge is largest on the low-dimensional problems at the top of the table and turns into a deficit on the high-dimensional ones at the bottom. It is essentially uncorrelated with sample size ($\rho=0.11$) and with the number of classes ($\rho=0.09$; multiclass mean $\Delta=+1.9$ vs.\ binary $+2.2$, Mann--Whitney $p=0.69$), so the split is about the \emph{shape} of the decision boundary, not the size of the task. This also creates room for improvement in FERL, for example, by doing feature selection beforehand.

\paragraph{FERL wins where the boundary is curved and low-dimensional.}
The top of the table is dominated by problems whose optimal boundary is
strongly nonlinear in a handful of features. The extreme case is
\emph{banana} ($d{=}2$): its two interleaved crescent clusters have no useful linear separator, so LR barely clears the base rate ($57.1$) while FERL-deep reaches $89.8$. The same mechanism drives \emph{ring} ($+12.5$; nested-sphere, quadratic boundary), \emph{phoneme} ($+11.2$), and the small nonlinear multiclass sets \emph{vowel} ($+13.1$), \emph{glass} ($+10.2$) and \emph{ecoli} ($+7.4$), where soft membership lets a single feature contribute graded evidence to several classes at once. 

\paragraph{LR wins on oblique, near-linearly-separable, high-dimensional data.}
The reverse also applies. The archetype is \emph{twonorm} ($\Delta{=}-11.9$): its optimal boundary is a single oblique hyperplane in $20$ dimensions, exactly what LR represents natively ($97.7$) and exactly what an axis-parallel partition can only approximate as a high-variance ``staircase''. The remaining FERL losses share this pattern --- \emph{vehicle} ($-7.7$), \emph{optdigits} ($-5.2$), \emph{texture} ($-4.3$). These are not FERL-specific errors, but an intrinsic limitation of any rule learning algorithm.

\newcommand{\ShiftLevel}{1}
\newcommand{\ShiftConformalCoverage}{55.0\%}
\newcommand{\ShiftNativeCoverage}{77.3\%}
\newcommand{\ShiftNativeSetSize}{4.10}
\newcommand{\ShiftConformalSetSize}{1.09}

\section{Coverage Under Covariate Shift}
\label{app:covariate-shift}

Here, we test whether the prediction sets react when the test distribution moves away from the calibration distribution, and compare the usefulness of the plausibility score of FERL's decision against the baseline $1-\hat p_y$. FERL DS-conformal thresholds true-class DS plausibility on the calibration fold, while global split conformal thresholds $1-\hat p_y$ on the same fold. Both wrap the same trained FERL predictor and are calibrated in-distribution at $\alpha=0.1$.

Across seven benchmarks and three seeds, we perturb every test feature independently as $x'_f=x_f+k\sigma_f\epsilon_f$, where $\sigma_f$ is its training-set standard deviation, $\epsilon_f\sim\mathcal{N}(0,1)$, and
$k\in\{0,0.25,0.5,1,1.5,2\}$. At $k=0$, both the FERL DS-conformal plausibility set and global split conformal attain approximately $90\%$ coverage. At $k=\ShiftLevel$, predictive accuracy falls to $50.2\%$: global conformal coverage falls to \ShiftConformalCoverage{} while its mean set size remains nearly fixed at \ShiftConformalSetSize, whereas FERL retains \ShiftNativeCoverage{} coverage by widening its mean set from $1.35$ to \ShiftNativeSetSize{} classes (Figure~\ref{fig:shift-coverage}).

\begin{figure*}[t]
\centering
\includegraphics[width=.8\textwidth]{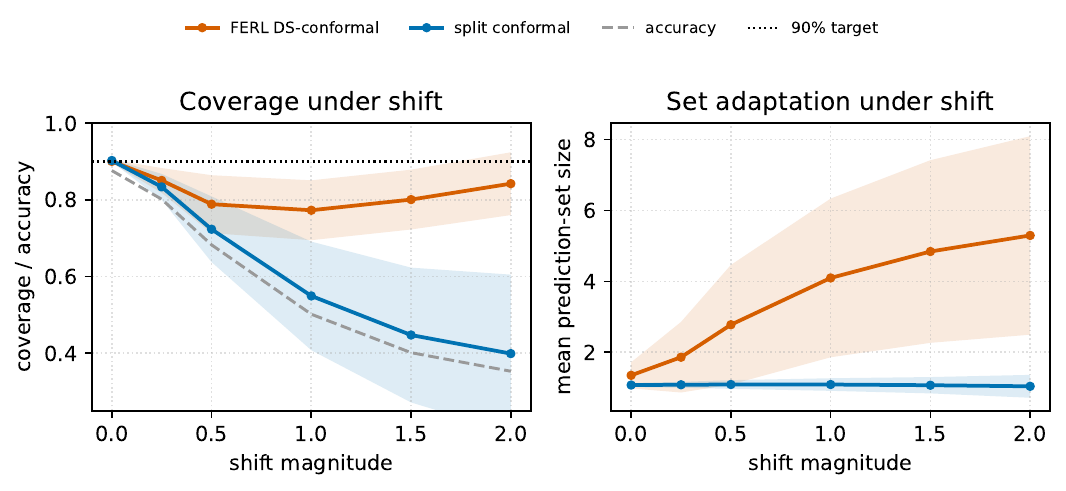}
\caption{\textbf{Prediction-set behaviour under graded covariate shift.}
Mean over seven tabular benchmarks after averaging three seeds per benchmark;
shaded bands are $1.96$ times the standard error across benchmarks. Both
methods begin at the $90\%$ target, but only the FERL DS-conformal plausibility
set expands as the shift grows.}
\label{fig:shift-coverage}
\end{figure*}

We also evaluate a milder feature-domain shift on ten benchmarks: training uses the lower half of a held-out feature's range and testing uses its upper half, after removing that feature from the predictors. FERL attains $69.7\%$ shifted coverage versus $66.7\%$ for global conformal and wins on eight of ten datasets, with mean set sizes $1.36$ and $1.03$, respectively.

\section{Reproducibility}
\label{app:repro}

\paragraph{Tabular benchmarks.}
We use thirty datasets from the KEEL repository \cite{alcala2011keel}: They span $146$--$19{,}020$
instances, $6$--$64$ features, and $2$--$11$ classes. Categorical features are
one-hot encoded and feature normalisation is applied.
Every result is a stratified five-fold cross-validation. Tables report the mean
and standard deviation over the thirty datasets, all on the $0$--$100$ scale.
Table~\ref{tab:datasets} lists them alongside their number of instances, input features (after one-hot encoding of categoricals) and classes.

\begin{table}[t]
\centering
\small
\caption{The thirty tabular benchmark datasets used in the main results, with number of instances, input features (after one-hot encoding of categoricals) and classes.}
\label{tab:datasets}
\begin{tabular}{@{}lrrr@{}}
\toprule
Dataset & \#Samples & \#Features & \#Classes\\
\midrule
australian & 690 & 14 & 2\\
balance & 625 & 4 & 3\\
banana & 5300 & 2 & 2\\
bupa & 345 & 6 & 2\\
contraceptive & 1473 & 9 & 3\\
crx & 653 & 15 & 2\\
ecoli & 336 & 7 & 8\\
german & 1000 & 20 & 2\\
glass & 214 & 9 & 6\\
heart & 270 & 13 & 2\\
ionosphere & 351 & 33 & 2\\
magic & 19020 & 10 & 2\\
mammographic & 830 & 5 & 2\\
optdigits & 5620 & 64 & 10\\
penbased & 10992 & 16 & 10\\
phoneme & 5404 & 5 & 2\\
pima & 768 & 8 & 2\\
ring & 7400 & 20 & 2\\
saheart & 462 & 9 & 2\\
satimage & 6435 & 36 & 6\\
segment & 2310 & 19 & 7\\
spambase & 4597 & 57 & 2\\
spectfheart & 267 & 44 & 2\\
texture & 5500 & 40 & 11\\
twonorm & 7400 & 20 & 2\\
vehicle & 846 & 18 & 4\\
vowel & 990 & 13 & 11\\
wdbc & 569 & 30 & 2\\
wine & 178 & 13 & 3\\
wisconsin & 683 & 9 & 2\\
\bottomrule
\end{tabular}
\end{table}

\paragraph{Concept-bottleneck benchmarks.}
\emph{CUB}: Caltech--UCSD Birds \cite{wah2011cub} with the concepts of \citet{koh2020concept} over $200$ classes. Symbol extractor follows the same architecture as in \citet{koh2020concept}. \emph{AwA2}: Animals with Attributes~2 \cite{xian2018zeroshot} with $85$ binary predicates
over $50$ classes: we use all classes (not the zero-shot split) with a per-image stratified $60/20/20$ partition.Predicted concepts come from a detector we train ourselves: an ImageNet-initialised ResNet-50 with an $85$-way sigmoid head, optimised with BCE for $15$ epochs (Adam, learning rate $10^{-4}$, batch $64$, cosine schedule).

\paragraph{FERL configurations.}
\emph{FERL-compact} uses quantile three-set partitions (implemented as in \citep{fumanal2024ex}), with soft inference and the consistent-CCI split criterion. \emph{FERL-deep} and \emph{FERL-medium} replaces the fixed partition with learned-threshold splits (bootstrap with $25$ resamples) and grows to at most $50$ and $150$ rules, respectively, at maximum depth $5$ and $12$ respectively as well. 

\paragraph{Evaluation protocol.}
Per-concept calibration is isotonic regression fit on the validation split.
Conformal sets use split conformal at $\alpha=0.1$. Risk-controlled abstention calibrates the acceptance threshold with a Bonferroni-corrected Clopper--Pearson bound at $\delta=0.05$. Selective-risk AURC integrates risk over coverage. Utility-discounted accuracy follows \citet{zaffalon2012evaluating}.

\paragraph{Compute and code.}
Symbol Extractors are trained on a single NVIDIA GTX~1080Ti. FERL experiments run on a conventional CPU. Code, configurations, and scripts will be regenerated once the paper is published.

\end{document}